\theoremstyle{plain}
\newtheorem{theorem}{Theorem}[section]
\newtheorem{lemma}[theorem]{Lemma}
\theoremstyle{definition}
\newtheorem{definition}[theorem]{Definition}
\theoremstyle{remark}
\def\eqref#1{equation~\ref{#1}}
\def\1{\bm{1}}
\DeclareMathAlphabet{\mathsfit}{\encodingdefault}{\sfdefault}{m}{sl}
\SetMathAlphabet{\mathsfit}{bold}{\encodingdefault}{\sfdefault}{bx}{n}
\def\gA{{\mathcal{A}}}
\def\gH{{\mathcal{H}}}
\def\gM{{\mathcal{M}}}
\def\gO{{\mathcal{O}}}
\def\gS{{\mathcal{S}}}
\def\sN{{\mathbb{N}}}
\def\sP{{\mathbb{P}}}
\def\sR{{\mathbb{R}}}
\newcommand{\E}{\mathbb{E}}
\newcommand{\hide}[1]{}
\newcommand{\yw}[1]{\textcolor{red}{[Yi: #1]}}
\icmltitlerunning{Revisiting Cooperative MARL Common Practices}
\begin{document}

\twocolumn[
\icmltitle{Revisiting Some Common Practices in \\Cooperative Multi-Agent Reinforcement Learning}



\icmlsetsymbol{equal}{*}

\begin{icmlauthorlist}
\icmlauthor{Wei Fu}{iiis}
\icmlauthor{Chao Yu}{tsee}
\icmlauthor{Zelai Xu}{tsee}
\icmlauthor{Jiaqi Yang}{ucb}
\icmlauthor{Yi Wu}{iiis,sqz}
\end{icmlauthorlist}

\icmlaffiliation{iiis}{Institute for Interdisciplinary Information Sciences, Tsinghua University, China}
\icmlaffiliation{ucb}{Department of Electrical Engineering and Computer Sciences, University of California, Berkeley, CA, USA}
\icmlaffiliation{sqz}{Shanghai Qi Zhi Institute, China}
\icmlaffiliation{tsee}{Department of Electronics Engineering, Tsinghua University, China}

\icmlcorrespondingauthor{Wei Fu}{fuwth17@gmail.com}
\icmlcorrespondingauthor{Yi Wu}{jxwuyi@gmail.com}

\icmlkeywords{Machine Learning, ICML}

\vskip 0.3in
]



\printAffiliationsAndNotice{} 


\begin{abstract}
Many advances in cooperative multi-agent reinforcement learning (MARL) are based on two common design principles: value decomposition and parameter sharing. A typical MARL algorithm of this fashion decomposes a centralized Q-function into local Q-networks with parameters shared across agents.  Such an algorithmic paradigm enables centralized training and decentralized execution (CTDE) and leads to efficient learning in practice. Despite all the advantages, we revisit these two principles and show that in certain scenarios, e.g., environments with a  highly multi-modal reward landscape, value decomposition, and parameter sharing can be problematic and lead to undesired outcomes. In contrast, policy gradient (PG) methods with individual policies provably converge to an optimal solution in these cases, which partially supports some recent empirical observations that PG can be effective in many MARL testbeds. Inspired by our theoretical analysis, we present practical suggestions on implementing multi-agent PG algorithms for either high rewards or diverse emergent behaviors and empirically validate our findings on a variety of domains, ranging from the simplified matrix and grid-world games to complex benchmarks such as StarCraft Multi-Agent Challenge and Google Research Football. We hope our insights could benefit the community towards developing more general and more powerful MARL algorithms. Check our project website at \begin{small}{\url{https://sites.google.com/view/revisiting-marl}}\end{small}.
\end{abstract}

\section{Introduction}

Value decomposition has become the most popular paradigm for tackling cooperative multi-agent reinforcement learning (MARL) problems~\cite{sunehag2017value,wang2020off,rashid2018qmix,wang2020qplex,hu2021riit}. Under this paradigm, algorithms decompose the global Q-function into local Q-functions that satisfy the Individual Global Max (IGM) principle~\cite{son2019qtran}. Then the optimal global policy can simply be derived via greedily selecting locally optimal actions for each agent. This process enables centralized training and decentralized execution (CTDE)~\cite{ctde1,ctde2}, and the factorized representation has also led to enormous benefits, including credit assignment~\cite{zhang2020collaq}, emergent communication~\cite{wang2019learning}, exploration~\cite{liu2021cooperative}, etc.
In addition, the local Q-networks are typically implemented with shared parameters leading to a shared replay buffer across agents, which not only reduces the number of total parameters but also results in more stable training~\cite{christianos2020shared}.

Despite all the aforementioned advantages, in this paper, we revisit popular cooperative MARL methods and argue that some common practices including value-decomposition representation and parameter sharing, can be problematic in certain scenarios. 
We start our discussion by considering a simple 2-by-2 matrix game, the \emph{XOR game}, which has two symmetric global optimal solutions. We prove that even for this particularly simple game, a value-decomposition-based algorithm cannot represent the underlying payoff structure, and thus may not learn the optimal policy via factorized local Q-networks. 
Interestingly, we notice that in this example, policy gradient (PG) with individual policies provably converges to either of the two optimal solutions. 
In addition, we also introduce another policy-based \emph{auto-regressive} representation and show that PG with such a representation can learn \emph{all} the strategy modes even on the $N$-player version of the XOR game, which has exponentially many optimal solutions. 
Our theoretical findings suggest that on certain multi-modal problems, policy gradient, which is often considered sample-inefficient as an on-policy method, can be preferable compared to popular value-based learning methods. This insight can be served as initial evidence to partially explain the recent empirical observations that PG methods perform surprisingly well on many MARL benchmarks~\cite{papoudakis2020benchmarking,yu2021surprising}.

We further extend our study to a grid-world game, \emph{Bridge}, which is a temporal variant of the XOR game.
Based on our theoretical insights, we present two practical suggestions for applying PG to cooperative Markov games concerning different evaluation targets.
For the highest final rewards and the fastest convergence, it can be beneficial to adopt an individual or agent-specific policy.
For a policy that can capture multi-modal behaviors, we propose an attention-based auto-regressive policy representation that has a minimal computation overhead while maximally retaining the expressiveness power of a joint policy. A few training techniques for efficiently learning multi-modal auto-regressive policies are also introduced in the Bridge game. 

We also validate our two suggestions in more complex domains including the StarCraft Multi-Agent Challenge (SMAC)~\cite{starcraft} and Google Research Football (GRF)~\cite{kurach2019google}.
We show that in many scenarios, using individual or agent-specific policies can further improve the performance of the state-of-the-art multi-agent proximal policy optimization (PPO) method~\cite{yu2021surprising}. 
Meanwhile, using our auto-regressive policy learning method, we can discover interesting emergent behaviors that are never discovered by existing fully decentralized PG or value-based methods.

We emphasize that our work is \emph{NOT} claiming for a new algorithm that is universally applicable. Instead, \textbf{we attempt to provide concrete analysis to explain recent empirical evidence on PPO's effectiveness and present practical alternatives of interest in certain scenarios.} 
We hope our theoretical and empirical findings can bring useful insights to the community towards developing more general and more powerful MARL algorithms in the future.




\section{Related Work}\label{sec:related}

Centralized Training with Decentralized Execution (CTDE)~\cite{lowe2017multi} is perhaps the most popular framework for MARL. The fundamental idea is to adopt global information for simplified training (i.e., by reducing a POMDP to an MDP) while maintaining policies that only take local information for producing actions. 
Value decomposition (VD) methods perform centralized Q-learning and represent the global Q-function as a combination of local Q-networks following the Individual-Global-Max (IGM) principle~\cite{son2019qtran}, i.e., the optimal joint action should be equivalent to the collection of greedy local actions of each agent, which naturally enables CTDE. 
Representative VD methods, such as Value Decomposition Network (VDN)~\cite{sunehag2017value}, QMIX~\cite{rashid2018qmix},
QPLEX~\cite{wang2020qplex}, and other variants~\cite{rashid2020weighted,yang2020qatten}, have been providing increasing expressiveness power towards the function class satisfying the IGM principle.
Other works also introduce policy networks into the VD framework~\cite{su2020value,wang2020off,zhang2021fop,wu2021coordinated} or further adapt the decomposed Q-networks towards communication learning~\cite{wang2019learning}, zero-shot adaptation~\cite{hu2020updet}, credit assignment~\cite{zhang2020collaq}, or structural emergent behavior~\cite{wang2019influence,wang2020rode}.
Despite the success of the IGM principle, we argue in this paper that in certain scenarios with a highly symmetric multi-modal reward structure, VD methods or even the IGM principle itself can be problematic and therefore suggest feasible alternative directions for future MARL research. 


Another line of MARL methods, such as COMA~\cite{foerster2018counterfactual},
adopt policy gradient (PG) and follow CTDE by learning local policies with a centralized value function as the critic. Due to the on-policy fashion, PG methods are typically believed to be less sample efficient and therefore less utilized in the academic literature with limited computation resources. Whereas, some recent empirical studies~\cite{de2020independent,papoudakis2020benchmarking,yu2021surprising} demonstrate that with proper input representation and hyper-parameter tuning, multi-agent PPO can achieve surprisingly strong performance and sample efficiency in many cooperative MARL benchmarks compared to off-policy VD methods.
Our theoretical analysis partially justifies these recent empirical findings and provides insights toward more effective implementation of PG methods. 
Similar theoretical conclusions can be also implied from a concurrent work~\cite{leonardos2022global}, which studies the convergence of PG in Markov potential games.


Multi-modality is a common phenomenon in many multi-agent games. For example, many works have shown that diverse emergent behaviors can be obtained under a simple reward function~\cite{lowe2019pitfalls,baker2019emergent,tang2021discovering}. There is also a  direction of research focusing on discovering diverse behaviors, e.g., by evolution methods~\cite{cully2015robots,Pugh2016QualityDA}, population-based training~\cite{vinyals2019grandmaster,ParkerHolder2020EffectiveDI,lupu2021trajectory} or iterative policy optimization~\cite{Lanctot2017AUG,masood2019diversityInducing,zahavy2021discoveringDiverseNearlyOpt,zhou2022continuous}. In game theory, the multi-modality issue is closely related to the concept of equilibrium refinement~\cite{kreps1982sequential}, which studies selecting the desired Nash equilibrium. 
The XOR game can be also interpreted as a cooperative version of the chicken game, which corresponds to the concept of correlated equilibrium~\cite{aumann1974subjectivity}. Our work points out that the existence of multiple optimal strategies can be an issue for existing MARL methods. We also develop an auto-regressive representation for representing a single multi-modal policy.
Existing MARL literature primarily applies sequential execution for learning a consensus solution among agents by assuming all optimal solutions are known in advance~\cite{DBLP:conf/tark/Boutilier96}.
By contrast,
we focus on directly learning a multi-modal policy from scratch, which is more challenging.
Developing more powerful algorithms for diverse behaviors or sophisticated equilibria is beyond the scope of this paper.

We remark that the auto-regressive representation is widely adopted in the language generation literature, from N-gram models~\cite{damerau2018linguishtic} to recent attention-based architectures~\cite{vaswani2017attention,devlin2018bert}, which is the foundation of our proposed auto-regressive policy learning. 
There is a recent trend in developing non-auto-regressive text generation methods~\cite{gu2017non,qian2020glancing} for fast convergence and computational efficiency, which conceptually motivates our proposal of individual policy learning.
In recent MARL literature,
sequential policy update has been proposed to guarantee monotonic policy
improvement~\cite{DBLP:journals/corr/abs-2109-11251}, which improves the performance of PG methods under CTDE.
Instead of focusing on efficient training for decentralized execution, our proposed auto-regressive representation improves the policy modeling capacity and maintains the full expressiveness power of a centralized policy. Hence, we can learn diverse behaviors using a single auto-regressive policy.
Finally, auto-regressive policy learning requires broadcasting each agent's action to all the following agents, which assumes a perfect communication channel on actions and is therefore related to multi-agent communication~\cite{foerster2016learning,wang2019learning}.

\section{Background}


\subsection{Notation}
We study cooperative MARL under the framework of multi-agent Partially Observable Markov Decision Process~\cite{DBLP:conf/tark/Boutilier96,DBLP:journals/ai/KaelblingLC98},
which is defined as a tuple $\gM = \langle n, \gS, \gA, \gO, r, P, O, \gamma, H\rangle$. Here, $n \in \sN$ is the number of agents, $\gS$ is the state space, $\gA$ and $\gO$ are the action and observation space of each agent, $r : \gS \times \gA^n \to \sR$ is the reward function, $P$ is the transition model, $O : \gS \to \gO$ is the observation function, $\gamma$ is the discount factor, and $H$ is the horizon.
For state $s, s' \in \gS$ and a joint action $\mathbf{a} \in \gA^n$, the transition probability of reaching state $s'$ from state $s$ by executing action $\mathbf{a}$ is $P(s' \mid s, \mathbf{a})$.
At timestep $t$, $s_t$ denotes the state and each agent $i\in[n]$ receives a observation $o_t^i=O(i, s_t)$. Then, given the joint observation $\mathbf{o}_t=(o_t^1,o_t^2,\dots,o_t^n)$, agents output a joint action $\mathbf{a}\in\gA^n$ according to the joint policy $\pi : \gO^n \to \triangle\left(\gA^n\right)$. 
We aim to find the optimal joint policy $\pi^\star$ to maximize the expected return, i.e.,
\[
\pi^\star =\arg\max_\pi \mathbb{E}_{(s_t,\mathbf{a}_t)\sim(P,\pi)}\left[\sum_{t=1}^H \gamma^{t-1}r(s_t,\mathbf{a}_t)\right].
\]




\subsection{Value Decomposition in MARL}


Value-based methods in cooperative MARL tasks aim to learn a global Q-function $Q_{\textrm{tot}} : \gS\times\gA^n \to \mathbb{R}$ to estimate the future expected return given current state $s$ and joint action $\mathbf{a}$. However, the global state $s$ is typically unavailable during execution and the dimension of joint action space $\mathcal{A}^n$ grows exponentially w.r.t. agent number $n$. 
Value decomposition (VD) addresses this issue by decomposing the global Q-value $Q_{\textrm{tot}}(s,\mathbf{a})$ into local Q-values $Q_i(o^i,a^i)$. In particular, popular VD methods, such as VDN~\cite{sunehag2017value} and QMIX~\cite{rashid2018qmix}, learn local Q-functions $Q_i : \gO\times\gA \to \mathbb{R}$ and a mixing function $f_{\textrm{mix}}(\cdot;s)$ conditioning on the state $s$ to represent the global Q-value $Q_{\textrm{tot}}(s,\mathbf{a})$ by
\begin{align}\label{eq:vd}
Q_{\textrm{tot}}(s,\mathbf{a})=f_{\textrm{mix}}(Q_1(o^1,a^1),\dots,Q_n(o^n,a^n);s),
\end{align}
For a reduced model size and efficient training, the parameters for each local Q-function $Q_i$ are shared, which is called \emph{parameter sharing}.
$f_{\textrm{mix}}$ is enforced to satisfy the \emph{Individual-Global-Max} (IGM)~\cite{son2019qtran} principle such that
\emph{any} optimal joint action $\mathbf{a}^*$ satisfies
\begin{align}\label{eq:igm}
\mathbf{a}^*=\arg\max_{\mathbf{a}\in\gA^n}Q_{\textrm{tot}}(s,\mathbf{a})=\cup_{i=1}^n\{\arg\max_{a^i\in\gA}Q_i(o^i,a^i)\}.
\end{align}
Therefore, all optimal joint actions can be easily derived by independently choosing a local optimal action from each local Q-function $Q_i$, which enables centralized training and decentralized execution (CTDE).

Notably, as a more recent VD variant, QPLEX~\cite{wang2020qplex}, decomposes the global advantage function $A_{\textrm{tot}}(s,\mathbf{a})$, which is defined by
\begin{equation}\label{eq:ad}
A_{\textrm{tot}}(s,\mathbf{a})=Q_{\textrm{tot}}(s,\mathbf{a})-\max_\mathbf{a}Q_{\textrm{tot}}(s,\mathbf{a}),
\end{equation}
and learns a mixing function $f_{\textrm{mix}}^\textrm{A}(\cdot;s,\mathbf{a})$ to represent $A_{\textrm{tot}}(s,\mathbf{a})$ via local advantage functions, i.e.,
\begin{align}\label{eq:adv-vd}
A_{\textrm{tot}}(s,\mathbf{a})=f_{\textrm{mix}}^\textrm{A}(A_1(o^1,a^1),\dots,A_n(o^n,a^n);s,\mathbf{a}),
\end{align}
where $A_i(o^i,a^i)$ is the local advantage of agent $i$. 
QPLEX follows an \emph{advantage-based IGM principle}
\begin{align}\label{eq:adv-igm}
\mathbf{a}^*=\arg\max_{\mathbf{a}\in\gA^n}A_{\textrm{tot}}(s,\mathbf{a})=\cup_{i=1}^n\{\arg\max_{a^i\in\gA}A_i(o^i,a^i)\}.
\end{align}
It can be proved that QPLEX can represent the any functions satisfying the IGM principle.

\subsection{Policy Learning in MARL}
\label{sec:bg-policy}


Policy gradient (PG) and its variants~\cite{williams1992simple,schulman2017proximal,foerster2018counterfactual} directly optimize the policy $\pi$ by gradient descent.
In cooperative MARL tasks, popular PG methods follow CTDE by learning an individual actor  $\pi_{\theta_i}: \gO \to \triangle(\gA)$ for agent $i$ parameterized by $\theta_i$, and a centralized value function $V_{\psi_i} : \gS \to \mathbb{R}$ (i.e., critic) parameterized by $\psi_i$.
In order to leverage global information, the value function takes the global state $s$~\cite{yu2021surprising} or the combination of all the local observations $[o^1,\ldots,o^n]$~\cite{lowe2017multi} as its input for an accurate global value estimate. 
The majority of existing multi-agent PG works adopt parameter sharing, i.e., $\theta_1=\cdots=\theta_n=\theta$ and $\psi_1=\cdots=\psi_n=\psi$, for reducing model size.
Such a decentralized formulation naturally induces the following implicit joint policy with a \emph{fully independent} factorization: 
\begin{align}\label{eq:pg}
\pi(\mathbf{a}\mid \mathbf{o}) \approx \prod_{i=1}^n \pi_{\theta_i}(a^i\mid o^i).
\end{align}

\section{A Motivating Example: XOR Game}
\label{sec:xor}










	
\begin{wraptable}{R}{.35\columnwidth}
  \centering
  \vspace{-2mm}
    \begin{center}
		\begin{small}
			\begin{sc}
				\begin{tabular}{|l|c|c|} \hline 
					 0 & 1 \\ \hline 
					1 & 0 \\ \hline 
				\end{tabular}
			\end{sc}
		\end{small}
	\end{center}
	\vspace{-3mm}
      \caption{\small{Payoff matrix of XOR game.}}
\label{tbl:xor}
\end{wraptable}
We start our analysis by considering a cooperative game in the simplest form of a 2-by-2 matrix game called \emph{XOR game} as shown in \cref{tbl:xor}. 
In this 1-step stateless game, each agent has two possible actions and they will get a positive reward only if they output different actions. There are two symmetric and equally optimal strategies in this game.
We will show that value-decomposition methods may fail to converge on this particularly simple game (Sec.~\ref{sec:xor-vd}) but policy gradient methods can be proved to converge (Sec.~\ref{sec:xor-pg}). Lastly, we will show a simple PG variant that can learn a policy covering all possible modes even in the $n$-player extension of the XOR game (Sec.~\ref{sec:xor-ar}).




\subsection{Value Decomposition in XOR game}
\label{sec:xor-vd}

We assume each agent has two actions, 1 and 2. In this stateless setting, we use $Q_1$ and $Q_2$ to denote the local Q-functions, and $Q_\textrm{tot}$ to denote the global Q-function.
The IGM principle requires that for any optimal joint action $(a^{1*},a^{2*})$,
\begin{align}\label{eq:xor-igm}
(a^{1*},a^{2*})&=\arg\max_{a^1,a^2}Q_\textrm{tot}(a^1,a^2)\\
&=\{\arg\max_{a^1}Q_1(a^1),\arg\max_{a^2}Q_2(a^2)\}.
\end{align}


\begin{theorem} Value decomposition (Eq.~(\ref{eq:xor-igm})) cannot represent the underlying global Q-function in XOR game. \label{thm:val}
\end{theorem}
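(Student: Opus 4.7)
The plan is to argue by contradiction, exploiting the fact that the XOR game has \emph{two} distinct optimal joint actions, $(1,2)$ and $(2,1)$, both yielding reward $1$, while $(1,1)$ and $(2,2)$ yield reward $0$. Under any value-decomposition that satisfies the IGM principle in Eq.~(\ref{eq:xor-igm}), the set of joint actions achieving $\max Q_{\textrm{tot}}$ must coincide with the Cartesian product of the local argmax sets $\arg\max_{a^1} Q_1(a^1)$ and $\arg\max_{a^2} Q_2(a^2)$.

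First I would suppose, toward contradiction, that such $Q_1,Q_2$ and a mixing function realizing $Q_{\textrm{tot}}$ exist. Since $(1,2)$ is globally optimal, IGM forces $1\in\arg\max_{a^1}Q_1(a^1)$ and $2\in\arg\max_{a^2}Q_2(a^2)$; since $(2,1)$ is also globally optimal, IGM likewise forces $2\in\arg\max_{a^1}Q_1(a^1)$ and $1\in\arg\max_{a^2}Q_2(a^2)$. Hence $Q_1(1)=Q_1(2)$ and $Q_2(1)=Q_2(2)$, so both local argmax sets equal $\{1,2\}$.

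Then I would close the argument by invoking IGM in the forward direction: the product structure implies that every element of $\{1,2\}\times\{1,2\}$ must be a globally optimal joint action. In particular, $(1,1)$ and $(2,2)$ would have to attain $\max Q_{\textrm{tot}}$ even though they correspond to the strictly suboptimal reward $0$. This contradicts the assumption that $Q_{\textrm{tot}}$ represents the true payoff. I do not anticipate a real technical obstacle here; the only subtle point is to state the IGM condition as a set equality (not a one-sided inclusion), which is what Eq.~(\ref{eq:xor-igm}) asserts through the phrasing ``any optimal joint action.'' Making this set-level interpretation explicit is what drives the contradiction, since a weaker existence-style reading would let a decomposition pick out only one of the two symmetric optima and trivially succeed.
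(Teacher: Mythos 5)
Your proof is correct and is essentially the paper's argument, just organized as a single contradiction rather than the paper's case split: the paper first observes that unequal local Q-values would exclude one of the two optima, and then that equal local Q-values force all four entries of $Q_{\textrm{tot}}$ to coincide, which is exactly your observation that both argmax sets must be $\{1,2\}$ and hence $(1,1)$ and $(2,2)$ would be declared optimal. Both versions hinge on reading IGM as a set equality between the global argmax and the product of local argmaxes, which you correctly flag as the crux.
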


\begin{proof}
In XOR game, the desired global Q-value is the payoff. Specifically, if value decomposition methods were able to represent the payoff, we would have $Q_\textrm{tot}(1, 2)=Q_\textrm{tot}(2,1)=1$ and $Q_\textrm{tot}(1, 1)=Q_\textrm{tot}(2,2)=0$.

If either of these two agents has different local Q values, e.g. $Q_1(1)> Q_1(2)$, we have $\arg\max_{a^1}Q_1(a^1)=1$. Then \emph{any} optimal joint action
\begin{align}
(a^{1*},a^{2*})&=\arg\max_{a^1,a^2}Q_\textrm{tot}(a^1,a^2)\\
&=\{\arg\max_{a^1}Q_1(a^1),\arg\max_{a^2}Q_2(a^2)\} 
\end{align}
satisfies $a^{1*}=1$ and $a^{1*}\neq 2$ according to the IGM principle, which indicates that the joint action $(a^1,a^2)=(2,1)$ is sub-optimal, i.e., $Q_\textrm{tot}(2,1)<1$.
Otherwise, if $Q_1(1)=Q_1(2)$ and $Q_2(1)=Q_2(2)$, then $Q_\textrm{tot}(1, 1)=Q_\textrm{tot}(2,2)=Q_\textrm{tot}(1, 2)=Q_\textrm{tot}(2,1)$.

As a result, value decomposition cannot represent the payoff matrix of the 2-player permutation game.




\end{proof}

\begin{wrapfigure}{r}{0.5\columnwidth}
\centering\vspace{-25pt}
\subfigure{\vspace{-4pt}\includegraphics[width=0.43\columnwidth]{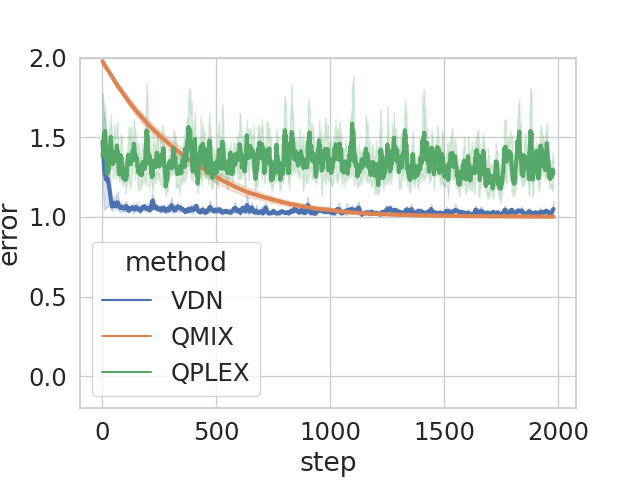}}
\vspace{-3mm}
\caption{Error in XOR game.}
\label{fig:xor-loss}
\vspace{-2mm}
\end{wrapfigure}
\textbf{Remark: }  \cref{thm:val} holds for \emph{all} VD methods that follow the IGM principle defined in Eq.~(\ref{eq:igm}). It suggests that \emph{global Q-values learned by VD for the XOR game can be arbitrary.
}
We conduct empirical experiments by applying state-of-the-art VD methods, including VDN~\cite{sunehag2017value}, QMIX~\cite{rashid2018qmix} and QPLEX~\cite{wang2020qplex}, by uniformly fitting the payoff matrix and average the results over 6 random seeds.
Note that since the XOR game is stateless, Q-values are just separate learnable constants.
Fig.~\ref{fig:xor-loss} shows the error between learned global Q-values and the payoff matrix throughout learning process.
None of the VD methods achieve a zero regression loss as pointed out by \cref{thm:val}.
We additionally emphasize that Theorem~\ref{thm:val} holds whenever the local Q networks are shared or not --- the failure is due to the limited representation power of VD methods.
More discussions can be found in Appendix~\ref{app:xor-vd-qlearning}.

\subsection{Policy Gradient in XOR game}
\label{sec:xor-pg}



We refer \emph{shared policy learning} (PG-sh) to the setting of learning a single policy parameter $\theta$ for all the agents, i.e., $\pi_1=\pi_2=\pi_\theta$, and refer \emph{individual policy learning} (PG-Ind) to the setting of learning a separate policy parameter $\theta_i$ for each agent's policy $\pi_{\theta_i}$. We will show by the following theorems that a shared policy cannot solve the XOR game while individual policy learning can provably converge to an optimal solution. 

\begin{theorem} Shared policy learning cannot learn an optimal policy for the XOR game. \label{thm:share-unsolvable}
\end{theorem}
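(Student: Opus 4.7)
The plan is to show that the symmetry imposed by sharing parameters across the two agents forces the induced joint distribution to be symmetric under swapping $a^1 \leftrightarrow a^2$, whereas every optimal joint action in the XOR game is \emph{asymmetric}. This symmetry obstruction caps the expected payoff strictly below the optimum.

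Concretely, I would parametrize the shared stochastic policy by a single scalar $p = \pi_\theta(a=1) \in [0,1]$, so that $\pi_\theta(a=2) = 1-p$. Because the two agents act independently conditioned on the (empty) observation and share parameters, the joint action distribution factorizes as $\pi(a^1,a^2) = \pi_\theta(a^1)\pi_\theta(a^2)$. I would then write the expected reward as
\begin{equation*}
J(\theta) \;=\; \Pr[a^1=1,a^2=2] + \Pr[a^1=2,a^2=1] \;=\; 2p(1-p).
\end{equation*}
A one-line calculation (derivative, or AM--GM) shows $\max_{p\in[0,1]} 2p(1-p) = \tfrac{1}{2}$, attained uniquely at $p=\tfrac{1}{2}$.

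To conclude, I would contrast this with the optimum of the game: the joint actions $(1,2)$ and $(2,1)$ each yield payoff $1$, so $\max_{\pi} \mathbb{E}_{\mathbf{a}\sim\pi}[r(\mathbf{a})] = 1$. Since $\tfrac{1}{2} < 1$, no shared policy (deterministic or stochastic) achieves the optimal expected return, proving the claim. Moreover, the same argument extends to any policy class that induces a permutation-symmetric joint distribution over $(a^1,a^2)$, which I would note as a remark: the obstruction is not representational capacity of $\pi_\theta$ but the symmetry of the two marginals combined with independent execution.

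I do not expect any genuine obstacle here; the only subtlety worth flagging is to make clear that the result holds regardless of whether $\pi_\theta$ is restricted to be deterministic or is allowed to be any distribution on $\{1,2\}$, since the bound $J(\theta)\le \tfrac{1}{2}$ depends only on the product form and on $p\in[0,1]$. I would also briefly comment that this highlights a strict separation from the individual-policy setting treated in the next theorem, where breaking the symmetry between $\theta_1$ and $\theta_2$ allows the joint distribution to concentrate on a single optimal mode.
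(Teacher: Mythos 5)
Your proposal is correct and is essentially identical to the paper's proof: both parametrize the shared policy by the probability $p$ of one action, observe that independence plus parameter sharing forces the success probability to be $2p(1-p)\le \tfrac12 < 1$, and conclude. The added remarks on symmetry and the deterministic/stochastic distinction are fine but do not change the argument.
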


\begin{proof} Let $\pi_i$ be a shared policy and $a^i$ denote the action by agent $i$. Let $\alpha = \sP(a^i = 0)$. The expected return of $\pi_i$ is 
	\begin{align}
		\E[R(\pi_i)] &= \sP(a^1 = 0, a^2 = 1) \notag \\
		&\qquad + \sP(a^1 = 1, a^2 = 0) \\ 
		&= 2\alpha(1-\alpha) \le 0.5 < 1,
	\end{align} 
but the optimal return is $1$. 
\end{proof}

\begin{lemma} Individual policies can represent the optimal policy for the XOR game. 
\end{lemma}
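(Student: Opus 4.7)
The plan is to give a short constructive proof by exhibiting explicit individual policies whose induced joint distribution places all mass on an optimal joint action. Unlike the shared-policy setting of Theorem~\ref{thm:share-unsolvable}, where both agents sample from the \emph{same} distribution over $\{1,2\}$ and symmetry forces the miscoordination probability to be bounded below, allowing separate parameters $\theta_1,\theta_2$ breaks this symmetry and lets the two marginals concentrate on different actions.

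Concretely, I would take $\pi_{\theta_1}$ to be the deterministic policy that plays action $1$ and $\pi_{\theta_2}$ to be the deterministic policy that plays action $2$. Under the fully independent factorization $\pi(\va\mid\vo)=\pi_{\theta_1}(a^1)\pi_{\theta_2}(a^2)$ from Eq.~(\ref{eq:pg}), the induced joint distribution satisfies $\sP(a^1=1,a^2=2)=1$. Reading off the payoff matrix in Table~\ref{tbl:xor}, this gives $\E[R]=1$, which matches the optimal return. (Any parameterization flexible enough to express deterministic marginals -- e.g.\ softmax policies in the limit of large logit gaps -- suffices; for finite parameters one can instead argue that the expected return can be made arbitrarily close to $1$.) A symmetric construction with the roles of the two agents swapped gives the other optimal mode $(2,1)$.

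There is essentially no obstacle: the entire content of the lemma is that the class of independent joint policies $\pi_{\theta_1}\otimes\pi_{\theta_2}$ is \emph{not} forced to be symmetric across agents, so it contains product distributions concentrated on off-diagonal entries. The only subtlety worth flagging explicitly in the write-up is the distinction between \emph{representing} the optimal policy (which is what the lemma asks for, and which the construction above settles) and \emph{learning} it via policy gradient -- the latter is the content of the convergence claim discussed in Sec.~\ref{sec:xor-pg} and would require a separate argument about the dynamics of PG on this landscape, which I would not attempt here.
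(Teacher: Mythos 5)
Your proof is correct and takes essentially the same route as the paper's: the paper also argues that since the XOR game admits a deterministic optimal joint action, one can simply assign each agent the corresponding deterministic individual policy. Your version merely makes the construction explicit (agent 1 plays $1$, agent 2 plays $2$) and adds the useful clarification that the lemma concerns representability rather than learnability.
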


\begin{proof}
In the XOR game, there exists a deterministic optimal (joint) policy.
Then, we can construct deterministic individual policies for each agent w.r.t. the global optimum.
\end{proof}


\begin{theorem} Individual policy learning via stochastic policy gradient can learn an optimal policy in the XOR game. 
\label{thm:pg-ind-xor}
\end{theorem}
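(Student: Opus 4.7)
The plan is to analyze the expected-return landscape as a function of the two agents' action probabilities and show that individual stochastic policy gradient almost surely escapes the unique interior saddle and converges to one of the two global optima.

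First I would parameterize each agent's policy by $p_i = \sP(a^i = 1)$ (concretely via a sigmoid, $p_i = \sigma(\theta_i)$, so that the gradient in $\theta$-space differs from the gradient in $p$-space only by the strictly positive factor $\sigma'(\theta_i)$, which preserves ascent directions) and write the expected return as
\[
J(p_1,p_2) = p_1(1-p_2) + (1-p_1)p_2 = p_1 + p_2 - 2 p_1 p_2.
\]
The gradient is $\nabla J = (1 - 2p_2,\; 1 - 2p_1)$, so the unique interior critical point is $(p_1,p_2)=(1/2,1/2)$ with $J = 1/2$; its Hessian has eigenvalues $\pm 2$ and is therefore a strict saddle. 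The only maximizers are the corners $(0,1)$ and $(1,0)$ at $J = 1$, while $(0,0)$ and $(1,1)$ are minimizers.

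Second, I would verify that the deterministic gradient flow avoids the saddle from almost every initialization. Changing coordinates to $u = p_1 - p_2$ and $v = p_1 + p_2 - 1$, the flow induced by $\nabla J$ decouples into $\dot u = 2u$ and $\dot v = -2v$. The stable manifold of the saddle is therefore the one-dimensional set $\{u = 0\}$, and every initialization off this measure-zero set converges to one of the corner maxima.

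Third, to upgrade to almost-sure convergence under stochastic updates, I would appeal to a standard stochastic-approximation theorem on non-convergence to strict saddles (e.g.\ Pemantle's result, or its refinements). REINFORCE-style updates give an unbiased estimate of $\nabla J$, and because each agent samples from a strictly positive categorical distribution the noise covers all directions in a neighborhood of the saddle. With step sizes satisfying the Robbins--Monro conditions, the iterates avoid $(1/2,1/2)$ almost surely; combined with the landscape analysis above, the induced policy therefore converges almost surely to one of the two global optima, which by the representation lemma are optimal joint policies for the XOR game.

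The main obstacle will be making the saddle-avoidance step fully rigorous under the chosen parameterization: because the optima are approached only as $|\theta_i| \to \infty$, one must ensure noise does not degenerate along the unstable direction before the saddle is escaped, and check the standard noise-covering and step-size hypotheses of the stochastic-approximation theorem. These are routine for softmax/sigmoid PG but constitute the real technical content; the landscape computation itself is essentially a one-line argument.
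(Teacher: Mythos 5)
Your proposal is correct and follows essentially the same strategy as the paper's proof: show that the expected-return landscape has no suboptimal local maxima and then invoke a saddle-escape result for stochastic gradient ascent. The paper's version is only a two-sentence sketch (it cites a generic ``SGD escapes saddle points'' result and observes that every non-optimal policy admits an ascent direction), so your explicit computation of $J(p_1,p_2)=p_1+p_2-2p_1p_2$, the identification of $(1/2,1/2)$ as the unique strict saddle with measure-zero stable manifold $\{p_1=p_2\}$, and your flagging of the sigmoid-parameterization-at-infinity subtlety are all strictly more detailed than what the paper provides.
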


\begin{proof} 
Previous works have shown that SGD can escape saddle points and converge to local optima under mild assumptions~\cite{kleinberg2018alternative}. 
For any individual policy $\pi_i$, if it has a positive reward but not the maximum reward (which is $1$), then we can find a better policy in the neighborhood by increasing probability along the direction of the permutation where $\pi_i$ puts maximum probability. Otherwise, if it has a zero reward, then we can find a better policy by increasing probability in all directions.
\end{proof}

\textbf{Remark: }
Through the theoretical analysis in Sec.~\ref{sec:xor-vd} and~\ref{sec:xor-pg}, we show that even a simple 2-by-2 matrix game, i.e., XOR game, can be a counterexample that VD-based MARL algorithms fundamentally fail to converge. By contrast, PG methods, as an SGD-based approach, can provably converge to an optimum.
This fact suggests that, even though PG methods are much less utilized in MARL compared with VD methods, it can be preferable in certain cases, e.g., games with multiple strategic modalities, which can be common in real-world applications. 

\subsection{Auto-Regressive Policy Learning}
\label{sec:xor-ar}

Although we have proved in the previous discussions that individual PG can effectively learn an optimal mode in the XOR game, the strategy mode that it finally reaches can be highly dependent on the initialization of the policies. 
Thus, a natural question will be:

    \emph{Can we learn a single policy that can cover all the optimal modes?}

We remark that learning multi-modality policies is meaningful for a wide range of focuses, such as emergent behavior~\cite{tang2021discovering}, exploration~\cite{mahajan2019maven}, learning to adapt~\cite{Lanctot2017AUG,balduzzi2019open} or interacting with humans~\cite{hu2020other}. 

Let's re-visit a global policy $\pi(a^1,a^2)$, which models the joint action probabilities for both agents. It would be trivial to construct a multi-modal global policy such that it has equal chances to output joint actions of either $(2,1)$ or $(1,2)$. 
However, following the decentralized policy gradient formulation (Eq.~\ref{eq:pg}), the factorized representation $\pi(a^1,a^2)\gets\pi(a^1)\pi(a^2)$ is only able to represent one particular mode. Therefore, for a multi-modal policy, we need to develop a policy representation with a stronger expressiveness of the joint policy and with minimal computation overhead compared with independent policies (Eq.~\ref{eq:pg}).

We propose to represent the policy in an \emph{auto-regressive} form, i.e., $\pi(a^1,a^2)=\pi(a^1)\pi(a^2|a^1)$ in XOR game. 

Formally, let's consider a general \emph{auto-regressive policy representation} for $n$ agents under $X=\{x_1,\dots,x_n\}$, a permutation over 1 to $n$, denoting an \emph{execution order} for the agents.
Given any execution order $X$, we can factorize the joint policy $\pi_\theta$ into the form of
\begin{equation}
\label{eq:ar}
    \pi_\theta(\mathbf{a}\mid \mathbf{o}) \approx \prod_{i=1}^n \pi_{\theta^{x_i}}(a^{x_i}\mid o^{x_i},a^{x_1},\dots,a^{x_{i-1}}),
\end{equation}
where the action produced by agent $x_i$ depends on its observation  $o^{x_i}$ and all the actions from its previous agents $x_1,\ldots,x_{i-1}$ under the execution order $X$.
The auto-regressive factorization can represent any joint policy in a centralized MDP.
Moreover, by sequentially generating actions according to $X$, the output dimension of each agent's policy $\pi_{\theta_{x_i}}$ remains unchanged, which results in a minimal policy computation overhead. 

We remark that \emph{auto-regressive policy} in Eq.~(\ref{eq:ar}) is based on a different factorization scheme from the classical decentralized learning paradigm in Eq.~(\ref{eq:pg}).
To sufficiently distinguish these two factorization schemes in our paper, we call the representation in Eq.~(\ref{eq:pg}) \emph{independent policy}.

In theory, we argue that an auto-regressive policy could learn substantially more diverse policies on the $n$-player variant of XOR game, called \emph{permutation game}. Here, we measure the diversity of a policy $\pi$ by the entropy of its trajectories, i.e., $\gH(\pi) = -\E_{\tau \sim \pi}[\log p_\pi(\tau)]$, where $\tau$ denotes a trajectory and $p_{\pi}(\tau)$ is the probability of $\tau$ under $\pi$.


\begin{definition}[Permutation game] \label{def:xor} An $n$-agent permutation game is a stateless Dec-POMDP. Each agent $i$ has $n$ actions $\gA = \{1, \cdots, n\}$. The reward function is $r(a^1, \cdots, a^n) = \mathbb I\{(a^1, \cdots, a^n) \text{ is a permutation}\}$.

Note that the XOR game is the 2-player permutation game.	
\end{definition}

\begin{theorem} For $n$-agent permutation game, the optimal independent policy has entropy $\gH(\pi^\star_{\mathrm{ind}}) = 0$, while the optimal auto-regressive policy could have entropy $\gH(\pi^\star_{\mathrm{auto}}) = \log(n!)$.
\end{theorem}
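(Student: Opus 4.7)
The plan is to prove the two entropy values separately: show that \emph{every} optimal independent policy has trajectory entropy $0$, and then exhibit one particular optimal auto-regressive policy whose trajectory entropy equals $\log(n!)$. Since the permutation game is a one-step stateless Dec-POMDP, a trajectory is just the joint action $\mathbf{a}=(a^1,\dots,a^n)$, so $\gH(\pi)$ coincides with the Shannon entropy of the induced distribution on $\gA^n$.

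For the independent case, write $\pi^\star_{\mathrm{ind}}(\mathbf{a}) = \prod_{i=1}^n \pi_i(a^i)$ and let $S_i = \supp(\pi_i) \subseteq \{1,\ldots,n\}$. Optimality forces the joint action to be a permutation with probability one, so $\Pr[a^i = a^j] = 0$ for all $i\neq j$. By independence, $\Pr[a^i = a^j = k] = \pi_i(k)\pi_j(k)$, hence $S_i \cap S_j = \emptyset$ whenever $i\neq j$. The $n$ nonempty sets $S_1,\dots,S_n$ are pairwise disjoint subsets of an $n$-element set, so each $|S_i| = 1$, every $\pi_i$ is a Dirac mass, and the trajectory distribution is concentrated on a single permutation. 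Therefore $\gH(\pi^\star_{\mathrm{ind}}) = 0$.

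For the auto-regressive case, I would fix the execution order $X = (1,2,\ldots,n)$ and define
\[
\pi_i(a^i \mid a^1, \ldots, a^{i-1}) = \frac{1}{n-i+1}\,\mathbb{I}\{a^i \notin \{a^1,\ldots,a^{i-1}\}\}.
\]
Sampling sequentially from this factorization is the textbook procedure for drawing a uniformly random permutation: every rollout has all distinct entries (so the reward is $1$ and the policy is optimal), and each of the $n!$ permutations appears with probability $1/n!$. Hence the induced trajectory distribution is uniform on $n!$ outcomes and $\gH(\pi^\star_{\mathrm{auto}}) = \log(n!)$.

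The argument is quite short; the one subtlety worth flagging is the ``could have'' in the statement. What I am actually showing is that some optimal auto-regressive policy achieves entropy $\log(n!)$, and a matching upper bound $\gH(\pi) \leq \log(n!)$ for any optimal policy follows because its support is contained in the set of $n!$ permutations. Thus the construction is entropy-maximal among all optimal policies, not only among auto-regressive ones. The only step requiring real care is the disjoint-support deduction for independent policies; the rest is a direct counting argument.
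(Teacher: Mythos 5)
Your proof is correct, and the second half (the uniform sequential sampler over remaining actions, giving a uniform distribution on the $n!$ permutations and hence entropy $\log(n!)$) is exactly the construction the paper uses — except that you get the normalization right: the paper writes $\frac{1}{n-i}$ where it should be $\frac{1}{n-i+1}$, since $n-i+1$ actions remain unused at step $i$. Where you genuinely go beyond the paper is the independent case: the paper simply \emph{asserts} that the optimal independent policy is deterministic and writes down one instance, whereas you prove that \emph{every} optimal independent policy must be deterministic via the disjoint-support argument ($\Pr[a^i=a^j]=\sum_k\pi_i(k)\pi_j(k)=0$ forces the supports $S_1,\dots,S_n$ to be pairwise disjoint, hence singletons). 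That lemma is needed for the claim $\gH(\pi^\star_{\mathrm{ind}})=0$ to be about the optimal independent policy rather than about one particular optimizer, so your version closes a real gap. Your closing remark that $\log(n!)$ is also an upper bound over all optimal policies (support contained in the $n!$ permutations) is a nice extra that the paper does not state.
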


\begin{proof} The optimal independent policy is given by $\pi^\star_{\mathrm{ind}} = (\pi_1, \cdots, \pi_n)$, where $\pi_1 = \cdots = \pi_{n-1} = 0$ and $\pi_n = 1$. Note that this is a deterministic policy, so $\gH(\pi) = 0$. 
	
	The optimal auto-regressive policy is given by $\pi^\star_{\mathrm{auto}} = (\pi_1, \cdots, \pi_n)$, where $\pi_i(a_i | a_{1}, \cdots, a_{i-1}, s_0) = \frac{1}{n-i}\mathbb{I}\{a_i \ne a_1, a_2, \cdots, a_{i-1}\}$. Note that the joint policy is $\pi = \mathrm{Unif}(\{(a_1, \cdots, a_n) \mid a_i \text{ is a permutation}\})$, which is a uniform distribution over $n!$ permutations, so its entropy is $\gH(\pi) = \log(n!)$. 
\end{proof}

\begin{wrapfigure}{r}{0.4\columnwidth}
\centering\vspace{-28pt}
\subfigure{\vspace{-4pt}\includegraphics[width=0.4\columnwidth]{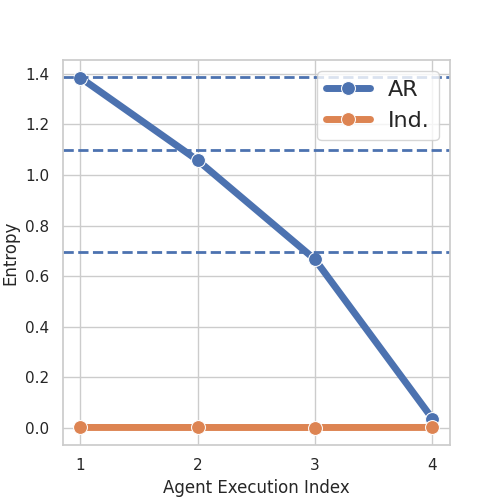}}
\vspace{-4mm}
\caption{\small{Entropy during action selection in the 4-player permutation game. 
}}
\label{fig:xor-ent}
\vspace{-2mm}
\end{wrapfigure}
We empirically evaluate the effectiveness of auto-regressive policy learning in the 4-player permutation game. We plot the entropy of policies learned by auto-regressive (AR) learning and individual (Ind.) learning in Fig.~\ref{fig:xor-ent}.
The results are averaged over 3 seeds with negligible variance.
The AR policy consistently learns a multi-modal behavior while individual learning always converges to a single mode with a zero policy entropy. 
Moreover, we also pick policies trained on a particular trial and illustrate the distribution of joint actions in Fig.~\ref{fig:xor-pg-heatmap}.
Note that there are a total of $4^4=256$ possible joint actions in the 4-player permutation game while only $4!=24$ of them yield a positive reward. We can observe that the AR policy successfully covers all the optimal modes while independent policies only converge to a specific mode. 

\textbf{Remark: }
Auto-regressive policy learning is a minimal approximation of centralized learning, which is beyond the setting of decentralized learning on Dec-POMDP~\cite{oliehoek2008optimal}. 
Most decentralized MARL methods formulated on Dec-POMDP
are based on independent policy factorization (Eq.~(\ref{eq:vd}, \ref{eq:igm}, \ref{eq:pg})), which has a fundamental limitation --- the joint policy cannot represent \emph{all} the optimal modes. Therefore, we propose Eq.~(\ref{eq:ar}) to overcome this expressiveness limitation via a mild additional assumption (communication of actions) and computation overhead.

\begin{figure}[t]
\centering
\subfigure{\includegraphics[width=\columnwidth]{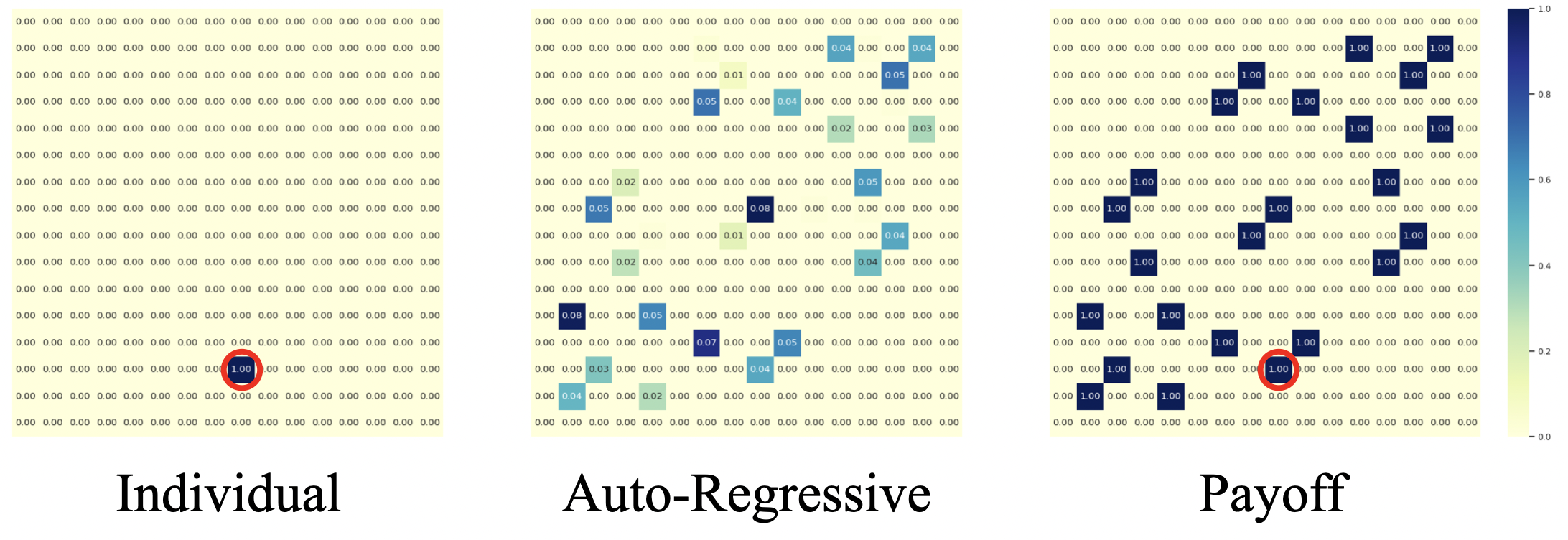}}
\vspace{-3mm}
\caption{Heatmap demonstrating the frequency of every possible joint action during evaluation of 1000 episodes.
X-axis indicates the joint action of the first two agents and Y-axis indicates the joint action of the last two agents, which forms a $16\times16$ matrix. The red circles indicate the single optimal mode discovered by individual PG. Auto-regressive PG can discover all the permutations that solve the game.}
\label{fig:xor-pg-heatmap}
\vspace{-2mm}
\end{figure}

\section{Bridge: a Temporal {XOR} Game}




We further extend our study to a grid-world Markov game, \emph{Bridge}, as shown in Fig.~\ref{fig:bridge}. In this game, two agents spawn symmetrically at the two corners of the map. Each agent needs to get through the bridge to reach the spawn point of the other agent. At each timestep, each agent receives a penalty which is proportional to the distance between the current position and its goal. 
More environment details can be found in Appendix~\ref{app:env-bridge}.

In this game, each grid may contain only one agent. Swapping positions between two agents within a single timestep is not permitted. 
Therefore, two agents cannot pass the bridge simultaneously.
As shown in the top row from Fig.~\ref{fig:bridge}, one agent must temporarily leave the bridge and wait until the other one passes by.  
By contrast, if both agents perform the same actions and both enter the bridge, it will result in a dead loop, and additional penalties may be incurred for both agents due to time waste. 
\emph{Bridge} can be interpreted as a temporal version of the XOR game since the two agents need to perform different macro actions, i.e., either wait or move, to achieve the optimal reward. Likewise, there are symmetric optimal strategies. 

\subsection{Agent-Specific Policy Learning on Bridge}
\label{sec:bridge-optimality}

\begin{wraptable}{R}{.46\columnwidth}
  \centering
  \vspace{-1mm}
  \begin{small}
  \begin{tabular}{l|c}
    \toprule
    Method & Reward\\
    \midrule
    QMIX & -1.18(0.70)\\
    QPLEX & -1.48(1.30)\\
    PG-sh & -0.64(0.02)\\
    PG-ID & \textbf{-0.48(0.00)}\\
    PG-Ind & \textbf{-0.48(0.00)}\\
    \midrule
    optimal & \textbf{-0.48}\\
    \bottomrule
  \end{tabular}
  \end{small}
  \vspace{-1mm}
      \caption{Evaluation results in Bridge over 3 seeds.}
      \label{tab:bridge-score}
      \vspace{-3mm}
\end{wraptable}

Based on our theoretical analysis in Sec.~\ref{sec:xor}, individual policy learning with unshared parameters (PG-Ind) would be preferred.
However, learning a separate policy for each agent introduces more model parameters and may challenge optimization. We consider an alternative to PG-Ind by learning an \emph{agent-ID-conditioned policy} (PG-ID): parameters are still shared across agents but the observation $o^i$ is concatenated with the one-hot agent ID as an additional policy input feature, which enables the policy to become \emph{agent-specific}.
The effectiveness of such an ID-conditioning technique was also studied by~\citet{yu2021surprising}.

We remark that PG-Ind and PG-ID are two implementation choices to realize an individual policy $\pi_i$ conditioning on agent index $i$. PG-Ind parameterizes each $\pi_i$ as $\pi_{\theta_i}(a^i\mid o^i)$ (conditioning by using different parameters) while PG-ID adopts $\pi_\theta(a^i\mid o^i;i)$ (conditioning by different inputs). Both two choices are valid under the universal approximation theorem~\cite{hornik1989multilayer}.
In practice, $\pi_{\theta_i}(a^i\mid o^i)$ has a stronger conditioning power than $\pi_\theta(a^i\mid o^i;i)$ but may be harder to train due to more parameters and fewer data for each $\pi_{\theta_i}$.

We compare the empirical performances of agent-specific policy learning, including PG-Ind and PG-ID, with shared policy learning (PG-sh) as well as popular VD algorithms, including QMIX and QPLEX, on the 2-player Bridge game. 
All the algorithms use the same batch size and are properly trained with sufficient samples. 
The final evaluation rewards are shown in Table.~\ref{tab:bridge-score}.
Both VD methods empirically fail to solve the game and produce particularly poor final rewards.
Regarding PG methods, since the game is fully symmetric, a shared policy may not solve this game (PG-sh), while all the agent-specific PG methods, i.e., PG-ID and PG-Ind, can consistently learn the optimal behavior with negligible variance. 
Moreover, since the VD methods may not converge in this Bridge game, we can observe that their variances are substantially higher than PG methods. 

\subsection{Learning Multi-Modal Behavior with Auto-Regressive Policy}
\label{sec:bridge-multi-modal}


\begin{figure}[t]
\centering
\includegraphics[width=\columnwidth]{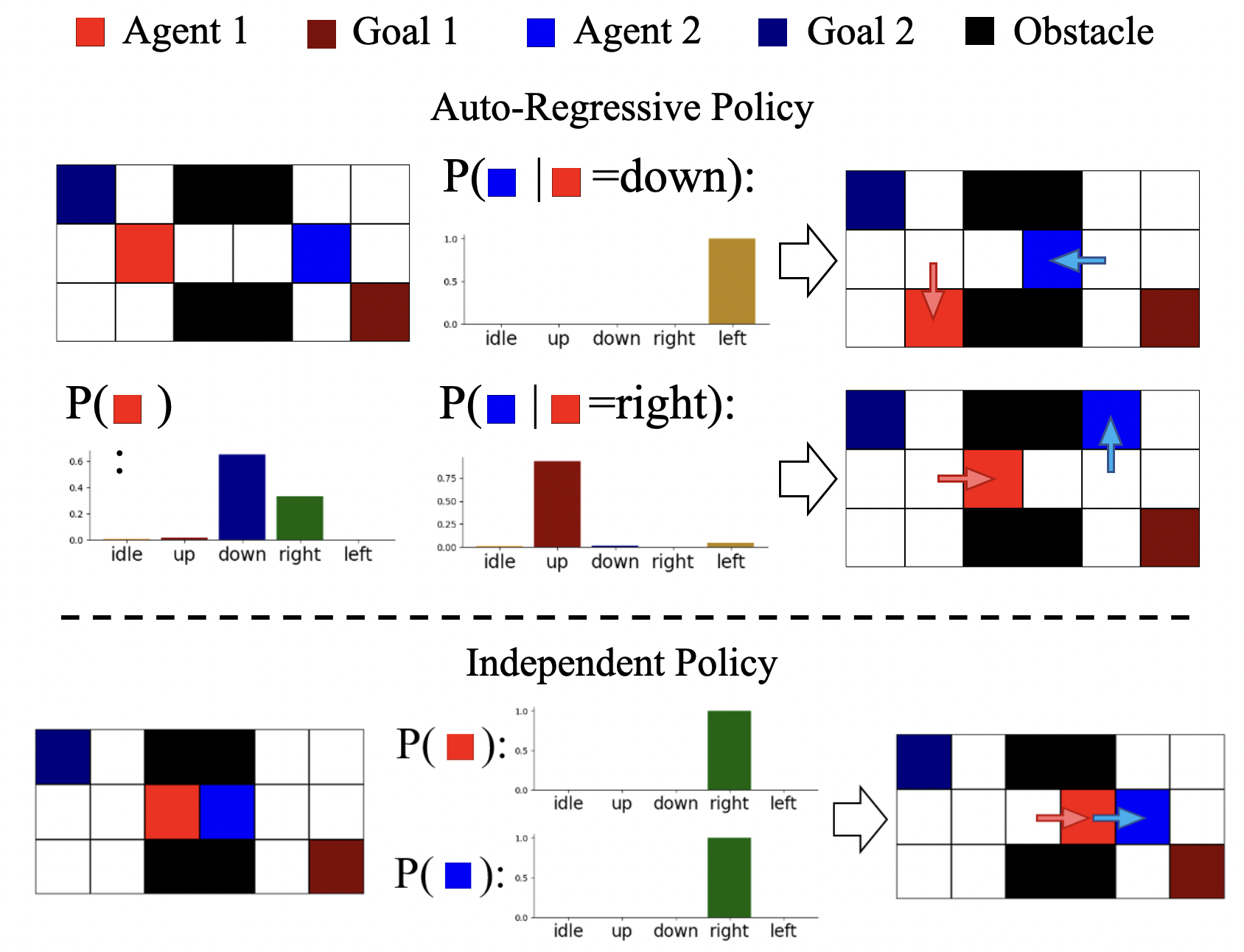}
\caption{(top) Multi-modal behavior by PG-AR in Bridge game. Depending on the action of agent 1 (red), agent 2 (blue) makes different decisions. (bottom) Uni-modal behavior by PG-Ind Agent 1 (red) always passes the bridge first.
}
\label{fig:bridge}
\end{figure}

We further apply auto-regressive policy to learn multi-modal behaviors in \emph{Bridge} game.
To effectively discover multi-modal behavior
in general Markov games, we propose the following training paradigm. 

\textbf{Attention-based policy: } 
We propose to use an attention-based architecture~\cite{vaswani2017attention}. 
We convert local observations and input actions into separate embeddings and use a self-attention mechanism to effectively derive a combined input representation.
Such a representation is size-invariant w.r.t. the number of input actions, which further enables parameter sharing. Self-attention also ensures the output properly conditions on the input actions, which helps learn a more coordinated joint policy. 

\textbf{Multi-step optimization: }
Given an execution order $X$, each agent $x_i$ will output an action from its own conditioned policy $\pi^{x_i}(a^{x_i}|o^{x_i},a^{x_{1:i-1}})$. Note that since all the output actions are made within the same timestep, every policy $\pi^{x_i}$ has the same return $R$. Therefore, rather than solely optimize the joint policy $\pi$, we can optimize all these $n$ ``partial'' policies by  policy gradient as follows:
\begin{align}
    \nabla J(\mathbf{\pi}) &=\sum_{j=1}^n\mathbb{E}\left[
    -R\cdot \sum_{i=1}^j\nabla\log\mathbf{\pi}(a^{x_i}|o^{x_i},a^{x_{1:i-1}})
    \right].
\end{align}

\textbf{Randomized execution order:}
Any execution order over $n$ should result in the same factorized distribution for the joint policy $\pi_\theta(\mathbf{a}\mid \mathbf{o})$. Therefore, to prevent the parameterized policy from overfitting a particular order, we randomized the execution order $X$ at each timestep during training.

\begin{table}[t]
    \centering
    \begin{scriptsize}
    \begin{tabular}{cc|ccc}
    \toprule
         PG-Ind&PG-AR& w.o.MO & w.o.RO
         &w.o.Attn.
         \\
         \midrule
         0.01(0.00)&0.74(0.03)&0.02(0.01)& 0.71(0.28)&0.60(0.44)\\ 
         \bottomrule
    \end{tabular}
    \caption{Policy entropy (standard deviation) at the state where both agents are at the bridge ends with ablation studies in Bridge.\label{tab:ablation}}
    \end{scriptsize}
\end{table}
\textbf{Results:} We compared the entropy of learned AR policy (PG-AR) with PG-Ind and perform ablation studies over the techniques, i.e., self-attention (Attn.), multi-step optimization (MO), and randomized execution order (RO). The evaluation results are shown in Table~\ref{tab:ablation}.
We can observe that individual learning (PG-Ind) converges to a specific mode with zero entropy while AR policies (PG-AR) can effectively produce high-entropy strategies.
Note that all three proposed techniques are critical.
We also illustrate the learned AR policy in the top part of Fig.~\ref{fig:bridge}, where agent 1 (red) captures two modes when entering the bridge, i.e., move right to pass the bridge and move down to wait. Based on the action of agent 1 (red), agent 2 (blue) can output the corresponding optimal action subsequently.

\section{Experiments on Popular MARL Testbeds}






Based on the evidence from the Bridge game, we conduct experiments on popular MARL benchmarks, including StarCraft Multi-Agent Challenge (SMAC)~\cite{starcraft} and Google Research Football (GRF)~\cite{kurach2019google}. We use the dense reward setting in both games.
These two environments are intuitively multi-modal since professional football teams or video game players usually have different styles leading to a diverse collection of winning strategies.
In this section, we want to show that 1) in complex multi-modal environments, \emph{value decomposition and policy sharing can be outperformed by agent-specific policy learning methods}, and 2) \emph{auto-regressive policy learning can discover interesting emergent behaviors requiring strong intra-agent coordination}, which are never discovered by existing multi-agent PG methods. 
Our implementation is based on the MAPPO project~\cite{yu2021surprising} with more details in Appendix~\ref{app:imple}. 


\begin{table}
\centering
\label{tab:smac}
\begin{scriptsize}
\begin{tabular}{ccccc}
\toprule
Map & PG-ID & PG-sh & PG-Ind & RODE\\
\midrule
      1c3s5z & \textbf{100.0(0.0)} &   97.4(1.0) &     99.1(0.7)  & \textbf{100.0(0.0)}\\
         2s3z & \textbf{100.0(0.7)} &   99.0(0.5) &     99.1(0.9)  & \textbf{100.0(0.0)}\\
     3s\_vs\_5z & \textbf{100.0(0.6)} &   96.7(1.8) &     93.8(1.8) & 78.9(4.2)\\
         3s5z & \textbf{96.9(0.7)} &   95.2(1.5) &     80.4(3.3) & 93.8(2.0)\\
 3s5z\_vs\_3s6z & 84.4(34.0) &   42.3(4.0) &     37.8(5.6) & \textbf{96.8(25.1)}\\
     5m\_vs\_6m & \textbf{89.1(2.5)} &   35.3(2.1) &     44.4(2.9) & 71.1(9.2)\\
     6h\_vs\_8z & \textbf{88.3(3.7)} &   79.9(4.8) &     11.4(2.5) & 78.1(37.0) \\
  10m\_vs\_11m & \textbf{96.9(4.8)} &   86.5(2.3) &     78.4(2.7) & 95.3(2.2) \\
     corridor & \textbf{100.0(1.2)} &   92.6(2.4) &     82.2(1.8) & 65.6(32.1)\\
         MMM2 & 90.6(2.8) &   \textbf{92.3(1.9)} &     13.0(3.7) & 89.8(6.7)\\
\bottomrule
\end{tabular}
\caption{Median evaluation winning rate (standard deviation) on selected SMAC maps over 6 random seeds. \label{tab:smac}}
\end{scriptsize}
\end{table}


\begin{table}
\centering
\label{tab:football}
\begin{scriptsize}
\begin{tabular}{ccccc}
\toprule
Scenario & PG-ID & PG-Ind &   CDS(QMIX) &  CDS(QPLEX) \\
\midrule
          3v1 & \textbf{90.7(1.5)} & \textbf{90.2(1.7)} & 73.7(3.4) & 83.6(4.0) \\
     CA(Easy) & 79.5(6.7) & \textbf{92.4(2.6)} & 43.0(5.6) & 40.4(4.8) \\
     CA(Hard) & \textbf{68.2(2.0)} & \textbf{67.8(3.4)} & 35.4(2.9) & 34.8(3.4) \\
      Corner & \textbf{27.3(1.3)} & 21.6(2.4) &  1.8(0.3) & 20.8(1.7) \\
          PS & 43.4(8.8) & 53.3(3.5) & 83.5(4.0) & \textbf{86.8(1.9)} \\
          RPS & 66.6(3.1) & \textbf{78.8(1.5)} & 65.5(7.0) & 75.1(2.4) \\
\bottomrule
\end{tabular}
\caption{Median evaluation winning rate (standard deviation) in GRF academy scenarios over 6 random seeds. (CA=counter attack; PS=pass and shoot; RSP=run, pass and shoot). \label{tab:football}
}
\end{scriptsize}
\end{table}



\subsection{Learning Policies with Higher Rewards}



We compare agent-specific policy learning, i.e., agent-ID-conditioned policy (PG-ID) and individual policy with unshared parameters (PG-Ind), with state-of-the-art VD-based algorithms, including  RODE~\cite{wang2020rode} for SMAC and CDS~\cite{li2021celebrating} for GRF.
We also include the performance of shared policy learning (PG-sh).



Evaluation results are shown in Table~\ref{tab:smac} and Table~\ref{tab:football} for SMAC and GRF respectively. 

In SMAC, PG methods outperform VD methods on 9 out of 10 selected maps as reported by \citet{yu2021surprising}. The agent ID can be critical and agent-conditioned policy (PG-ID) outperforms policies without agent ID input (PG-sh) in almost all the maps except \textit{MMM2}, where the performance of PG-ID and PG-sh is comparable.
For individual policies, it can achieve comparable performances with PG-ID on maps with a small number of agents, while on maps with a large number of agents, using unshared parameters may hurt performance. We believe this is due to the issue of model size.

In GRF, individual policies (PG-Ind) without parameter sharing achieve comparable or even higher results compared with agent-conditioned policies (PG-ID) in a total of 5 scenarios.
Compared with the state-of-the-art algorithm, CDS, which combines a basic VD method with exploration rewards, PG-based methods, even without any intrinsic rewards, outperform CDS on a total of 5 scenarios except a simple scenario \emph{pass-and-shoot}, which only has two agents. We believe this is due to PG converging towards a poor local optimum due to the deceptive distance-based rewards in GRF, which can be possibly addressed by leveraging more advanced exploration techniques in future work.

\textbf{Practical Suggestion: } Always include agent-specific information in the policy input; if there are not many agents in the game, individual policy learning may be worth trying. 

\subsection{Emergent Behavior by Auto-Regressive Modeling}
\label{sec:smac-fb-multi-modal}

We similarly apply auto-regressive policy learning (PG-AR) in SMAC and GRF. Due to an extremely high degree of freedom and the setting of dense reward, it is non-trivial to visually observe human interpretable multi-modal behaviors. Nevertheless, we still found interesting emergent behaviors discovered by auto-regressive policies: these emergent strategies require particularly strong coordination and are never discovered by other PG variants in our experiments. 

In SMAC, Fig.~\ref{fig:2m1z} visualizes the learned AR policy on the \emph{2m-vs-1z} map. There are two agents (marines) and one environment-controlled enemy. 
In this strategy, \emph{the two marines keep standing still throughout the game}. They perform attacks alternately while ensuring there is only one attacking marine at each timestep. Since the enemy will by default move towards the agent who attacks it, as a consequence of such an alternative attacking scheme, the enemy turns out to be constantly jittering between the two marines until death without getting any chance to even get close to any of the marines at all. By contrast, marines represented by independent policies will keep moving within the map to ensure a safe distance from the enemy. 
Fig.~\ref{fig:3s5z} visualizes another learned AR policy on the \emph{3s-vs-5z} map where 3 agents are trained to fight against 5 enemies. The agents trained by PG-AR learn to spread out on the map and move towards different corners to keep distance from each other so that every single agent can take charge of just 1 or 2 enemies. By contrast, agents by independent policies will never coordinate to spread out. We illustrate the heatmap of agent positions produced by different policies in Fig.~\ref{fig:3s5z-heatmap}, where a significant visitation difference can be observed. AR policies are more coordinated than independent policies. 

\begin{figure}[t]
\centering
\subfigure{\includegraphics[width=\columnwidth]{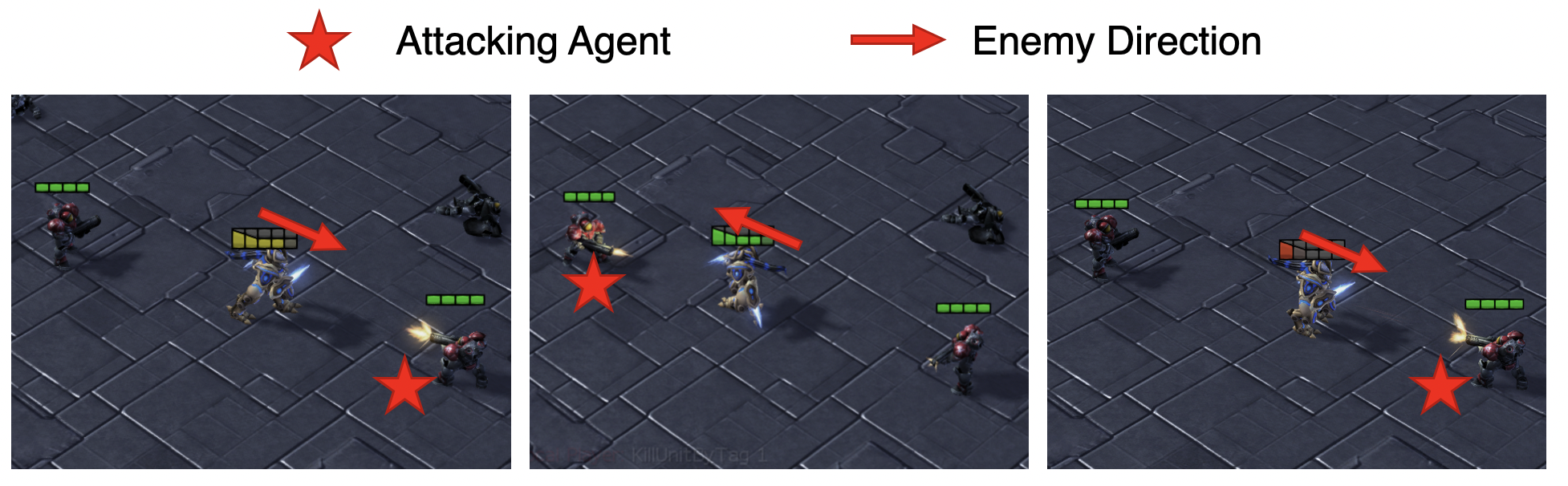}}
\caption{Emergent behavior on the \emph{2m\_vs\_1z} map of SMAC.
}
\label{fig:2m1z}
\end{figure}

\begin{figure}[t]
\centering
\subfigure{\includegraphics[width=\columnwidth]{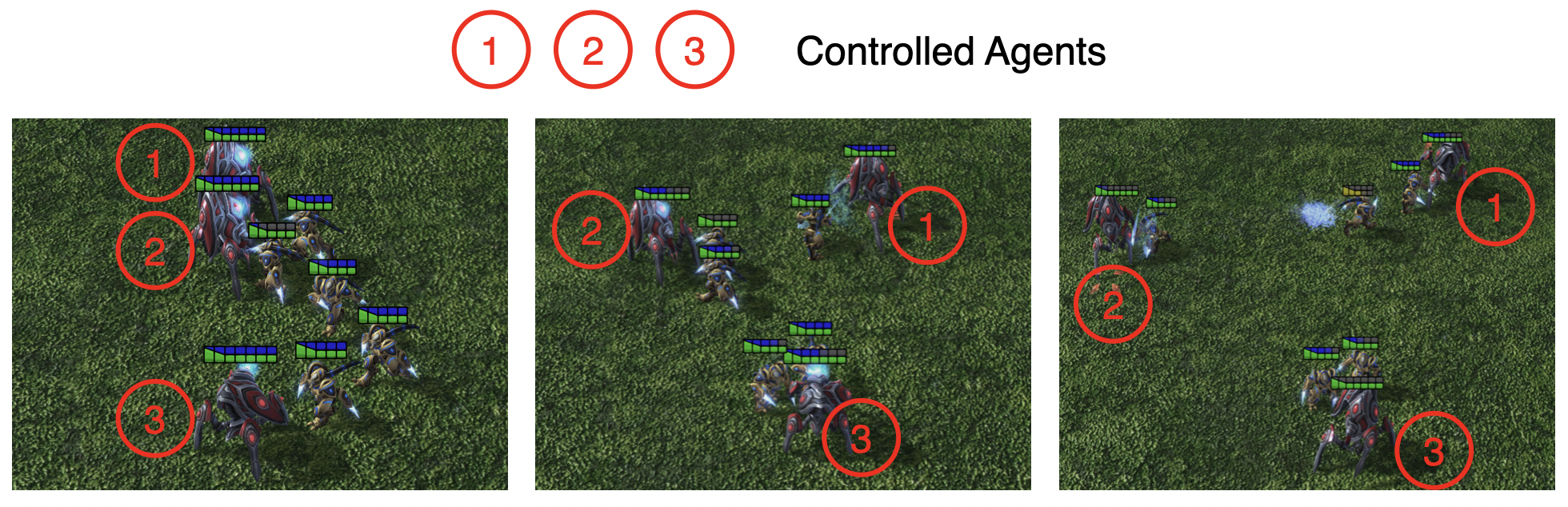}}
\caption{Emergent ``\emph{coordinated-spread-out}'' behavior on the \emph{3s\_vs\_5z} map of SMAC by the AR policy.
}
\label{fig:3s5z}
\end{figure}

\begin{figure}[t]
\centering
\subfigure{\label{fig:3s5z-screenshot}\includegraphics[width=0.38\columnwidth]{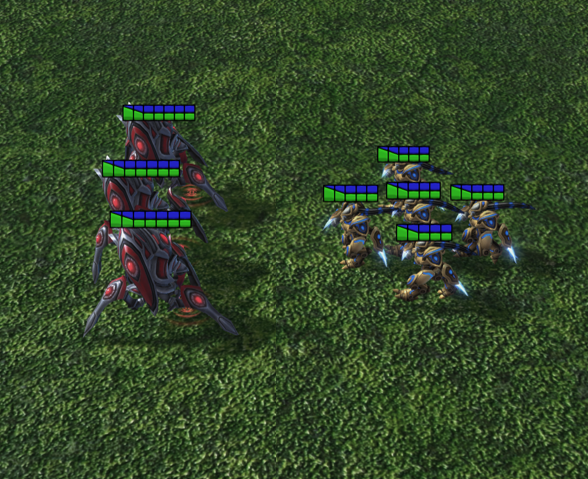}}
\subfigure{\label{fig:3s5z-heatmap}\includegraphics[width=0.6\columnwidth]{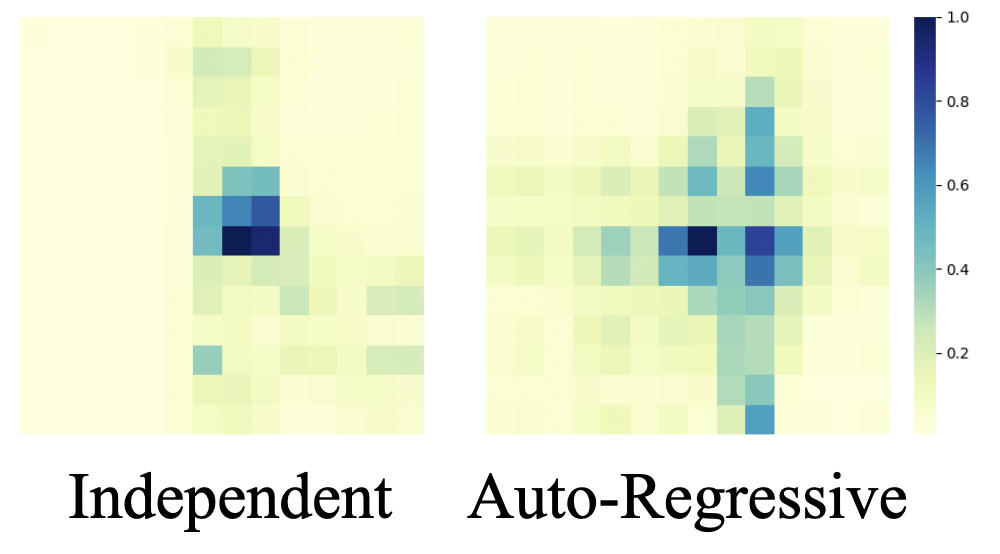}}
\caption{Heatmap of agent positions from the strategies learned by independent policies (middle) and AR policies (right). AR policies are more coordinated and can control the agents to well spread out to different corners of the map. 
}
\label{fig:3s5z-heatmap}
\end{figure}

\begin{figure}[t]
\centering
\includegraphics[width=\columnwidth]{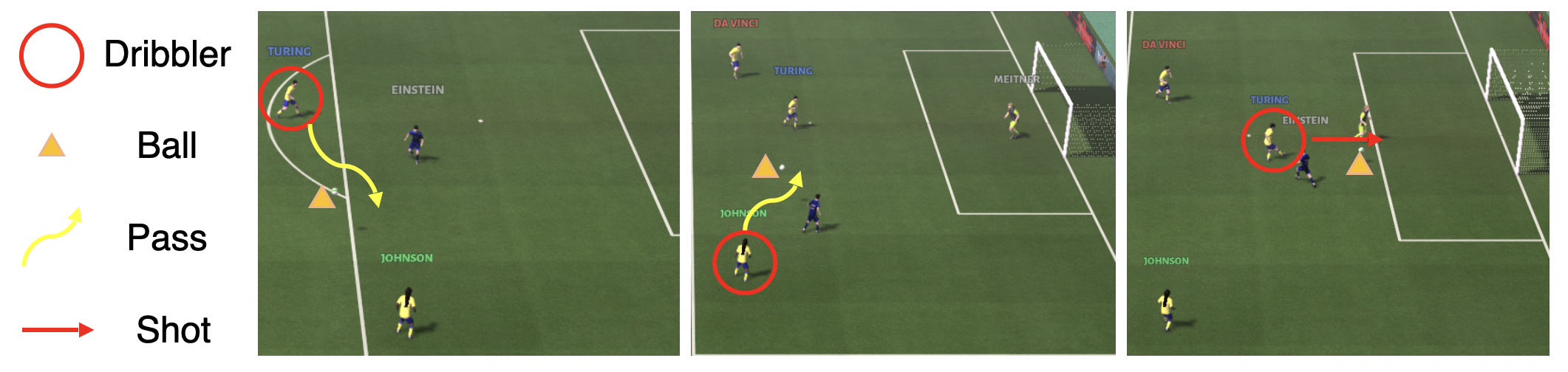}
\caption{Emergent ``Tiki-Taka'' behavior in the GRF \emph{3 vs 1 with keeper} scenario. Red circles indicate the dribbling player.
}
\label{fig:3v1}
\end{figure}

In GRF, we present the learned strategy by PG-AR for the \emph{3 vs 1 with keeper} scenario in Fig.~\ref{fig:3v1}, where the AR policy learns a neat ``Tiki-Taka'' style behavior: each of the 3 controlled players keeps passing the ball to their teammates, from outside the penalty to the right side and then to the box, before finally scoring a goal.
We remark that only the AR policy can discover such a strategy that involves interactions among all the 3 controlled players.
In the strategy by independent policies, the active agent will directly perform long shots outside the penalty area without performing short passes to teammates. 

\textbf{Remark:}
We admit that with additional training techniques (Sec.~\ref{sec:bridge-multi-modal}),
auto-regressive policy learning will converge slower than individual policy learning, particularly in games with a lot of agents.
So there is a trade-off between expressiveness capability and sample efficiency.
More experiments on auto-regressive policy learning can be found in Appendix~\ref{app:happo}.
In general, when optimizing the final reward is not the only goal of a research project, we would suggest adopting auto-regressive modeling for diverse emergent behaviors.

\section{Conclusion}

In this paper, we provide a concrete analysis of the two common practices of MARL algorithms: value decomposition and policy sharing. Theoretical results show that under highly multi-modal scenarios, both two techniques can lead to unsatisfying behaviors, while policy gradient methods can be preferable for learning the optimal solution as well as for learning multi-modal behaviors.  
We propose practical enhancements for implementing effective policy gradient algorithms in general multi-agent Markov games and achieve strong performances in challenging MARL testbeds including StarCraft MultiAgent Challenge and Google Research Football. 
We hope our empirical suggestions can benefit the practitioners while our theoretical analysis could serve as a starting point toward more general and more powerful MARL algorithms. 

\subsection*{Acknowledgement}
Yi Wu is supported by 2030 Innovation Megaprojects of China (Programme on New Generation Artificial Intelligence) Grant No. 2021AAA0150000. 


\bibliography{example_paper}
\bibliographystyle{icml2022}

\newpage
\appendix
\onecolumn

\section{Paper Website}
Please check our project website \url{https://sites.google.com/view/revisiting-marl} for more information, including visualization of learned strategies.

\section{Environment Details}
\subsection{\emph{Bridge} game}
\label{app:env-bridge}

The observation of \emph{Bridge} game is a 6-dim vector representation combined with [self-position, goal position, ally position], and the action space is a Categorical distribution over [idle, moving up, moving down, moving left, moving right]. The observation is processed to be symmetric to both sides of the bridge. After one agent reaches its own goal, it will be marked as ``dead'', and the [ally position] part of the other agent's observation will be masked. 

\subsection{SMAC}
\label{app:env-smac}
We follow to use the SMAC environment and evaluation protocol in MAPPO~\cite{yu2021surprising}.

\subsection{GRF}
\label{app:env-grf}
We use separate dense rewards for all algorithms, i.e., agents obtain independent ``scoring'' and ``checkpoints'' rewards at each timestep. Note that the reward setting is slightly different from that in~\citet{yu2021surprising}, where \textit{shared} dense rewards are used. We use the full action set and the ``simple115v2'' vector representation as the input of both policy and value inputs.

\section{Implementation Details}
\label{app:imple}

\subsection{Fitting the XOR Game}

We use $2\times2$ trainable parameters to represent the local Q-values for each agent, and additional trainable parameters for the mixing network. Specifically, we use 2 trainable parameters as the weight of VDN, a one-layer neural network with 64 hidden units as the hyper-net for QMIX, and 4-head attention with 64 hidden units for QPLEX. All the methods apply stochastic gradient descent with a learning rate of 0.1.

\subsection{Attention-Based Auto-regressive Backbone}
We split the observation into agent-wise slots indicating different observable semantic information. Specifically, slots in the Bridge game contain ``self'', ``goal'', and ``ally''; slots in SMAC contain ``self'', ``move'', ``ally'', and ``enemy''; slots in GRF contain ``self'', ``ball'', ``ally'', and ``enemy''. We embed different slots using different embedding layers, and different entries in the same slot share the same embedding layer, which is similar to the architecture adopted in \citet{baker2019emergent}. One-hot agent actions are embedded with another embedding layer and added onto the corresponding ``ally'' observation embeddings. Then, the embeddings are passed to a self-attention layer and a feed-forward layer, and the first output slot, which corresponds to the observation of the agent itself, is passed to the policy head to output the action distribution. We also mask the unavailable information in the self-attention layer, e.g., ally actions and positions will be masked if it is not visible. All embedding and self-attention layers have a hidden dimension of 64.

\subsection{Hyperparameters and Other Details}
\label{app:hyper-detail}
Hyperparameters of VD methods (except for CDS and RODE) and PG methods are shown in Table~\ref{tab:bridge-ppo-hyper} and Table~\ref{tab:bridge-vd-hyper}. For all the networks and embedding layers, we use 64 hidden units. The backbone of policy, value, and Q network is a 2-hidden-layer MLP for Bridge, with an additional GRU layer for SMAC and GRF. 
We use 4 attention heads for QPLEX and the attention-based backbone of auto-regressive policy. We also add layer norm after each linear layer.
Value normalization is applied to PG methods.
The batch size is 3200 for PG methods in Bridge and SMAC, and 10000 in GRF. The PPO epoch is 5 in Bridge and 15 across all GRF scenarios. PG methods are trained for 50M environment frames on the counterattack-hard and corner scenario, and 25M frames on other scenarios in GRF. In SMAC, we adopt the same PPO epoch and total environment frames as~\citet{yu2021surprising}. For CDS and RODE, we use the hyperparameters and implementation adopted from the original paper.

\begin{table}
    \centering
    \begin{tabular}{c|c}
    \toprule
    Name & value\\
    \midrule
    $\gamma$ & 0.99\\
    GAE~\cite{schulman2015high} $\lambda$& 0.95\\
    PPO clip & 0.2\\
    value clip & 0.2\\
    value loss & huber\\
    huber $\delta$ & 10.0\\
    entropy coefficient & 0.01\\
    optimizer & Adam~\cite{kingma2014adam}\\
    learning rate & 5e-4\\
    gradient norm & 10.0\\
    \bottomrule
    \end{tabular}
    \caption{PG hyperparameters.}
    \label{tab:bridge-ppo-hyper}
\end{table}

\begin{table}[]
    \centering
    \begin{tabular}{c|c}
    \toprule
    Name&Value\\
    \midrule
         $\gamma$&0.99\\
      hard update interval&50\\
      gradient norm&10.0\\
      optimizer & Adam~\cite{kingma2014adam}\\
    learning rate & 5e-4\\
      value loss & MSE\\
      \bottomrule
    \end{tabular}
    \caption{VD hyperparameters.}
    \label{tab:bridge-vd-hyper}
\end{table}

For PG-sh, we use a single actor and a single critic for all the agents. For PG-ID, we additionally concatenate observation with one-hot agent IDs as the input of actor and critic. For PG-ind, we train $n$ separate actors and critics for all agents. For PG-AR, we train a single actor with the attention-based auto-regressive representation and a single critic which is the same as PG-sh for each agent. For auto-regressive policy, we use an entropy coefficient of $0.05$ and omit agent ID.

\section{Additional Results}
\subsection{Q-learning with Value Decomposition in XOR Game}
\label{app:xor-vd-qlearning}
\begin{figure}
    \centering
    \includegraphics{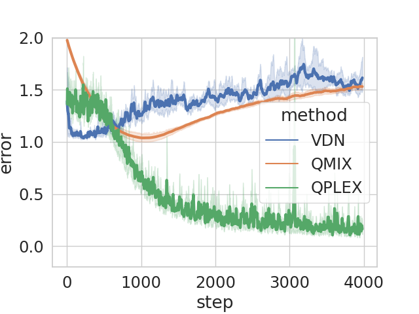}
    \caption{Error in XOR game (Q-learning).}
    \label{fig:vd-error-epsgreedy}
\end{figure}

In addition to directly fitting the payoff matrix of the XOR game (Fig.~\ref{fig:xor-loss}), we also re-run VDN, QMIX, and QPLEX in XOR game with \emph{$\epsilon$-greedy exploration} following the convention of deep Q-learning.  Error curves over 6 random seeds are shown in Fig.~\ref{fig:vd-error-epsgreedy}.
We remark that the generated data using $\epsilon$-greedy are much more \emph{on-policy} than uniform fitting, and it has been reported that stronger on-policiness improves the practical performance of Q-learning-based methods~\cite{laser,fedus2020revisiting}.
Empirically, we observe that in this setting, all of the applied VD methods can learn a global Q function where either $Q_\textrm{tot}(1,2)\approx1$ or $Q_\textrm{tot}(2,1)\approx1$. However, the global Q-values on other entries can be arbitrary, leading to an increasing error of VDN and QMIX across training.
QPLEX is more powerful and the error continuously decreases as training proceeds. However, we note that the error \emph{never} converges to zero, which is consistent with our theoretical analysis (\cref{thm:val}). 
In addition, we also remark that even though the use of on-policy samples helps VD methods derive an optimal solution in the XOR game, VD methods can completely fail in more complex games, such as the Bridge game (Section~\ref{sec:bridge-optimality}), due to its fundamental representation limitation.

\subsection{Learning Auto-Regressive Policies with Higher Rewards}
\label{app:happo}
\citet{DBLP:journals/corr/abs-2109-11251} proposes Heterogeneous-Agent Proximal Policy Optimization (HAPPO) to solve cooperative MARL problems. In HAPPO, policies are trained sequentially by accumulating importance ratios across agents (see~\citet{DBLP:journals/corr/abs-2109-11251} for more details). However, HAPPO still adopts independent policy factorization in Eq.~(\ref{eq:pg}) and learns an individual policy for each agent, which restricts the expressiveness power. To unlock the full potential, it is then natural to extend HAPPO with our auto-regressive (AR) representation, such that \emph{training and inference are both conducted sequentially.} We name this extension \emph{HAPPO-AR}. By implementing such an algorithm, we can investigate to what extent expressiveness power affects sample efficiency and the performance of the state-of-the-art algorithm. Hopefully, HAPPO-AR can obtain the benefits of both --- it can maintain high sample efficiency while learning interesting emergent behavior that requires strong intra-coordination.

We remark that the focus of HAPPO-AR is different from the focus of learning multi-modal behavior as presented in Sec.~\ref{sec:bridge-multi-modal} and~\ref{sec:smac-fb-multi-modal}. In the main body, we want to show the effectiveness of the AR representation,
while sample efficiency and algorithm performance are not the most imperative topics.
By contrast, in this subsection, 
we aim at investigating the effect of the AR representation on algorithm performance. Learning multi-modal behavior is not the ultimate goal. Instead, we hope that the AR representation can aid the algorithm to converge to a better optimum.
Therefore, instead of applying techniques introduced in Sec.~\ref{sec:bridge-multi-modal}, we only employ a minimal modification on HAPPO:
the MLP policy takes one-hot actions of other agents as an additional input. Besides, we align the inference and training order to better utilize auto-regressive learning.


\begin{table}[]
    \centering
    \begin{tabular}{cc|c}
    \toprule
      &  HAPPO & HAPPO-AR\\
    \midrule
          3v1  & 95.1 (5.8) & 94.9 (4.4)$^\star$ \\
     CA-E  & 54.1 (36.5) & 60.5 (35.9)$^\star$ \\
     CA-H   & 29.2 (43.9)&  16.7 (39.5) \\
      Corner  & 35.4 (42.5) &  \textit{39.1 (40.0)$^\star$}\\
          PS  & 93.5 (6.0) &  \textit{95.0 (4.5)$^\star$}\\
          RPS  & 98.7 (4.2) &  \textit{97.7 (4.2)$^\star$} \\
    \bottomrule
    \end{tabular}
    \caption{Median evaluation winning rate (standard deviation) on GRF academy scenarios over 6 random seeds. $^\star$ indicates that HAPPO-AR achieves comparable or superior performance.
    We remark that the performance should not be directly compared since HAPPO-AR violates CTDE. (CA-E\&H=counter attack easy\&hard; PS=pass and shoot; RSP=run, pass and shoot)}
    \label{tab:grf-happo}
\end{table}

We evaluate HAPPO and HAPPO-AR on the 6 academy scenarios in Google Research Football following the same evaluation protocol as illustrated in Appendix~\ref{app:hyper-detail} \emph{except that we utilize the shared dense reward} as \citet{yu2021surprising}. Results are presented in
Table~\ref{tab:grf-happo}. HAPPO-AR outperforms or obtains comparable performance with HAPPO in 5 out of 6 academy scenarios.
This result indicates that the AR representation can be combined with the state-of-the-art PG method to improve algorithm performance.
We also visualize the behavior of HAPPO and HAPPO-AR in the 3-vs-1 scenario. While HAPPO learns plain pass-and-shoot, HAPPO-AR can still learn ``Tiki-Taka'' behavior as shown in Sec.~\ref{sec:smac-fb-multi-modal} (see our project website for GIF demonstration). However, in harder scenarios like counterattack-hard and corner,
we do not observe distinguishable behavioral differences.

Finally, we remark that auto-regressive policy learning can be implemented in different ways depending on the purpose. We recommend applying techniques introduced in Sec.~\ref{sec:bridge-multi-modal} if the aim is multi-modal behavior. If the aim is a higher reward, we recommend the implementation introduced in this section.




\subsection{Evaluation Performance of Multi-Modal Behavior}

\begin{table}
    \centering
    \begin{tabular}{c|c}
    \toprule
         Scenario & Winning Rate\\
         \midrule
         2m\_vs\_1z & 100.0(0.0)\\
         3s\_vs\_5z & 100.0(0.9)\\
         \midrule
         GRF 3v1 & 92.7(3.7)\\
         \bottomrule
    \end{tabular}
    \caption{Median evaluation winning rate of auto-regressive policy in SMAC and GRF.}
    \label{tab:eval-ar}
\end{table}

In Sec.~\ref{sec:smac-fb-multi-modal}, we have shown emergent behavior on the \emph{2m\_vs\_1z} and \emph{3s\_vs\_5z} map in SMAC and on the academy-3-vs-1-with-keeper scenario in GRF. We additionally show the
evaluation performance of these learned policies in Table~\ref{tab:eval-ar}. The results show that PG-AR can successfully learn \emph{winning} multi-modal strategies.

\end{document}